\newtheorem{theorem}{Theorem}[section]
\newtheorem{lemma}[theorem]{Lemma}
\title{ILPO-NET: Network for the invariant recognition of arbitrary volumetric patterns in 3D

}
\author{
  Dmitrii Zhemchuzhnikov, Sergei Grudinin \\
  Univ. Grenoble Alpes, CNRS, Grenoble INP, LJK\\
	38000 Grenoble, France\\
  \texttt{\{dmitrii.zhemchuzhnikov,sergei.grudinin\}@univ-grenoble-alpes.fr} \\
}
\begin{document}

\maketitle

\begin{abstract}

Effective recognition of spatial patterns and learning their hierarchy is crucial in modern spatial data analysis. Volumetric data applications seek techniques ensuring invariance not only to shifts but also to pattern rotations.
While traditional methods can readily achieve translational invariance, rotational invariance possesses multiple challenges and remains an active area of research.
Here, we present ILPO-Net (Invariant to Local Patterns Orientation Network), a novel approach that  handles arbitrarily shaped patterns with the convolutional operation inherently invariant to local spatial pattern orientations using the Wigner matrix expansions. Our architecture
seamlessly integrates the new convolution operator and, when benchmarked on diverse volumetric datasets such as MedMNIST and CATH, demonstrates superior performance over the baselines with significantly reduced parameter counts -- up to 1000 times fewer in the case of MedMNIST. Beyond these demonstrations, ILPO-Net's rotational invariance paves the way for other applications across multiple disciplines. Our code is publicly available at \url{https://gricad-gitlab.univ-grenoble-alpes.fr/GruLab/ILPO/-/tree/main/ILPONet}.

\end{abstract}

\section{Introduction}

In the constantly evolving world of data science, three-dimensional (3D) data models have emerged as a focal point of academic and industrial research. As the dimensionality of data extends beyond traditional 1D signals and 2D images, capturing the third dimension opens new scientific challenges and brings various opportunities. The possible applications of new methods range from sophisticated 3D models in computer graphics to the analysis of volumetric medical scans.

With the advent of deep learning, techniques that once revolutionized two-dimensional image processing are now being adapted and extended to deal with the volumetric nature of 3D data. However, the addition of the third dimension not only increases the computational complexity but also opens new theoretical challenges. One of the most pressing ones is 
the need for persistent treatment of volumetric data in arbitrary orientation. A particular example is medical imaging, where the alignment of a scan may vary depending on the equipment, the technician, or even the patient. 

However, achieving such rotational consistency is non-trivial. While data augmentation techniques, such as artificially rotating training samples, can help to some extent, they do not inherently equip a neural network with the capability to recognize rotated patterns. Moreover, such methods can significantly increase the computational cost, both at the training and inference time, especially with high-resolution 3D data. The community witnessed a spectrum of novel approaches specifically designed for these challenges. As we will see below, they range from modifications of traditional convolutional networks to the introduction of entirely new paradigms built on advanced mathematical principles.

This paper presents a novel approach to {\em invariant} pattern recognition in {\em regular volumetric data}. In contrast to other methods, our convolution operation maps from 3D to 3D space without constraints on the filter shape.
We shall note that pattern recognition can be invariant or equivariant to the pattern orientation. 
The equivariant approach generally allows for a better expressivity of the model but requires more model parameters and additional dimensions in the output map to memorize pattern orientations.
The invariant approach may lack expressiveness but enables staying in the 3D space with much fewer model parameters.

\section{Related Work}

Neural networks designed to process spatial data learn the data hierarchy by detecting local patterns and their relative position and orientation.
However, when dealing with data in two or more dimensions, these patterns can be oriented arbitrarily, 
which makes neural network predictions dependent on the orientation of the input data.
Classically, this dependence can  be bypassed 
through data augmentation with rotated copies of data samples in the training set \citep{krizhevsky2012imagenet}. 
In the volumetric (3D) case, augmentation often results in significant extra computational costs. 
For some types of three-dimensional data,   the 
canonical
orientation of data samples or their local volumetric patterns can be uniquely defined \citep{Pags2019,Jumper2021,igashov2021spherical,zhemchuzhnikov20226dcnn}. 
In most real-world scenarios, though,
it is common for 3D data to be oriented arbitrarily. 
Thus, there was a pressing need for methods with specific properties of rotational invariance or equivariance by design.
We can trace two main directions in the development of these methods:
those based on equivariant operations in the $SO(3)$ space (space of rotations in 3D),
and those with learnable filters that are orthogonal to the $SO(3)$ or $SE(3)$ (roto-translational) groups.

The pioneering method from the first class was the Group Equivariant Convolutional Networks (G-CNNs) introduced by \citet{cohen2016group}, who proposed a general view on convolutions in different group spaces. Many more methods  were built up subsequently upon this approach \citep{worrall2018cubenet, winkels20183d, bekkers2018roto, wang2019deeply, romero2020attentive, dehmamy2021automatic, roth2021group, knigge2022exploiting, liu2022group}.   
Several implementations of Group Equivariant Networks were specifically adapted for regular volumetric data, e.g.,
CubeNet\citep{worrall2018cubenet} and 3D G-CNN \citep{winkels20183d}.  
The authors of these methods consider a discrete set of 90-degree rotations and reflections, which exhaustively describe the possible positions of a cubic pattern on a regular grid. 
However, we shall note that, 
typically,
both discrete and regular data are representations of the continuous realm, which embodies a continuous range of rotations. 
As a result, 
they cannot be reduced to just a finite series of 90-degree turns.
Another limitation is that this group of methods
performs summing over 
rotations that can lead to the higher output of radially-symmetrical filters, which limits the expressiveness of the models
because the angular dependencies of patterns are not memorized in the filters, as we show in Appendix \ref{app:limitation_of_summing_up_over_rotations}.
Another branch in this development direction was represented by methods aimed at detecting patterns on a sphere. In Spherical CNNs, Cohen et al. proposed a convolution operation defined on the spherical surface, making it inherently rotationally equivariant \citep{cohen2018spherical}. 
Spherical CNNs are a comprehensive tool for working with spherical data, but they have limited application to volumetric cases. 
 When thinking of expanding this approach for volumetric data where each voxel possesses its own coordinate system, there remains the challenge of information exchange between different spheres.  

Let us characterize methods from the second class without delving deeply into mathematical terms.
Here, each layer of the network operates with products of pairs of oriented input quantities.
These products inherit the orientation of the input, and then they are summed up with learnable weights. The first two methods to mention in this section are the Tensor Field Networks (TFN) \citep{Thomas2018TensorFN} and the N-Body Networks (NBNs) \citep{Kondor2018NbodyNA}. 
\citet{kondor2018clebschgordan} presented a similar approach, the Clebsch-Gordan Nets applied to data on a sphere.
These models  employed spherical tensor algebra working on irregular point clouds. 
\citet{weiler2018} proposed 3D Steerable CNNs, where they applied the same algebra to regular voxelized data. 
These three methods impose constraints on the trainable filters and consider only equivariant filter subgroups.
As a result, they may not discriminate some patterns, as we show in Appendix \ref{app:limitation_expressiveness}.
\citet{satorras2021n} built EGNN on the same idea.
However, they achieved equivariance  in a much simpler but less expressive
way without the usage of Clebsch-Gordan coefficients and spherical harmonics. 
It is also worth mentioning the works of \citep{ruhe2023clifford, liu2024clifford, zhdanov2024clifford}, where the authors applied the same logic to geometric algebra but used multivectors instead of irreducible representations of $S^2$.

Apart from the two main directions, we can highlight the application of differential geometry, such as moving frames, to volumetric data, as demonstrated by  \citet{sangalli2023moving}. This approach uses local geometry to set up the local pattern orientation. This idea unites the method with the family of Gauge networks \citep{cohen2019gauge}. 
The current implementation still depends on the discretization of input data. 
Rotating input samples can significantly reduce accuracy, as shown in \citep{sangalli2023moving}.

Considering the points mentioned above, 
there is a need to create a technique that can detect local patterns of any shape in input 3D data, regardless of their orientation.
This method should approach spaces \(\mathbb{R}^3\) and \(SO(3)\) differently. 
While operating in \(\mathbb{R}^3\) requires a convolution, 
one shall avoid summation over orientations in the rotational space.
Andrearczyk et al. followed this approach in \citep{Andrearczyk2020}, but they   restricted the shape of the learnable filters. 
Additionally, their method has a narrow application domain, 
whereas we intend to develop a data-generic technique.
Below, we propose a novel convolutional operation, Invariant to Local Features Orientation Network layer, that can detect arbitrary volumetric patterns, regardless of their orientations.  
This operation can be used in any convolutional architecture without substantial modifications. 
Our experiments on several datasets, CATH and the MedMNIST collection,  demonstrate that this operation can achieve higher accuracy than the state-of-the-art methods with up to 3 orders of magnitude fewer learnable parameters.
To summarize, our contributions are:
\begin{enumerate}
    \item In contrast to previous approaches, our method detects arbitrary-shaped filters in regular volumetric data;
    \item We propose a rotational pooling operation that considers continuous space of rotations and avoids summing up in the rotational space;
    \item The novel convolution can be used in any convolutional architecture without other modifications.
\end{enumerate}

\section{Theory}

\begin{figure}[]
    \centering
    \includegraphics[width=1\textwidth]{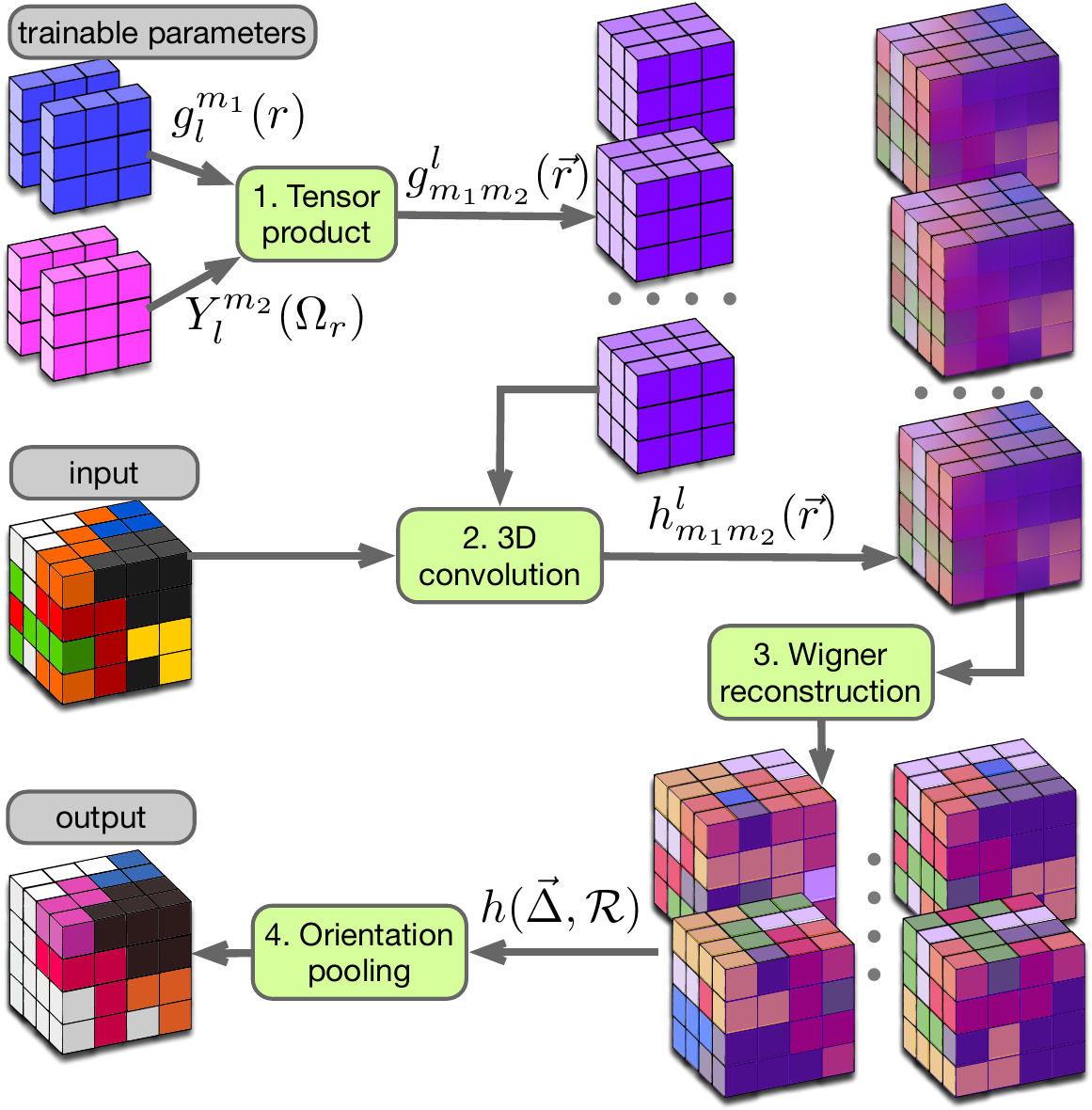}
    \caption{Schematic illustration of the ILPO convolution. The diagram showcases the main steps involved in our convolution process: 1) Tensor product of trainable filter coefficients and spherical harmonics; 2) 3D convolution of the input image and the rotated filter coefficients; 3) Reconstruction of the convolution output in the SO(3) space; 4) Orientation (soft)-max pooling.}
    \label{fig:ilfoscheme}
\end{figure}

\subsection{Problem statement}

The conventional 3D convolution can be formally expressed as:

\begin{equation}
h(\vec{\Delta}) = \int_{\mathbb{R}^3} f(\vec{r} + \vec{\Delta}) g (\vec{r}) d \vec{r},
\end{equation}
where $f(\vec{r})$ is a function describing the input data, $g(\vec{r})$ is a filter function, and $h(\vec{\Delta})$ is the convolution output function that depends on the position of the filter with respect to the original data $\vec{\Delta}$. 
The meaning of this operation in light of pattern recognition is that the value of the overlap integral of the filter and the fragment of the input data map around point \( \vec{\Delta} \) serves as an indicator of the presence of the pattern in this point.
However, such a recognition works correctly only if the orientation of the pattern in the filter and in the input data coincide. 
Therefore, if the applied pattern has a wrong orientation, 
a conventional convolution operation cannot recognize it.

The logical solution would be to apply the filter in 
multiple
orientations. 
Then, the orientation of the filter appears in the arguments of the output function. 
In this approach, we consider a convolution with a rotated filter, represented as \( g(\mathcal{R} \vec{r} ) \), where \( \mathcal{R} \in SO(3) \),

\begin{equation}
h(\vec{\Delta}, \mathcal{R}) = \int_{\mathbb{R}^3} f(\vec{r} + \vec{\Delta}) g (\mathcal{R} \vec{r}) d \vec{r}.
\label{eq:6dconv}
\end{equation}
The outcome of this convolution depends on both the shift \( \vec{\Delta} \), and the filter rotation \( \mathcal{R} \). 
The output function \(h(\vec{r}, \mathcal{R})\) is now defined in \( 6D\) but if we want to obtain a 3D map that indicates that a pattern \(g(\vec r)\) in arbitrary orientation was detected at a point $\vec{\Delta}$ of map \(f(\vec r + \vec \Delta)\), we need to conduct an additional {\em orientation pooling} operation:
\begin{equation}
    h(\vec{\Delta}) = \textrm{OrientionPool}_{\mathcal{R}} [h(\vec{\Delta}, \mathcal{R})],
    \label{eq:OrientationPool}
\end{equation}
which can generally be defined in different ways.
The only constraint on this operation is that it must be {\em rotationally invariant} with respect to $\mathcal{R}$ or, in the discrete case, {\em invariant to the permutation} of the set of rotations:
\begin{equation}
\textrm{OrientionPool}_{\mathcal{R}} [f (\mathcal{R})] = \textrm{OrientionPool}_{\mathcal{R}} [f (\mathcal{R}\mathcal{R}')]
~~\forall f ~~\textrm{and}~~ \forall \mathcal{R}'. 
\end{equation}
The simplest pooling operation satisfying this constraint would be an average over orientations $\mathcal{R}$.
However, this will be equivalent to averaging the filter \(g(\vec r)\) over all possible orientations. 
Such an averaged filter is radially symmetric and is thus not very expressive.  
A better $\textrm{OrientionPool}_{\mathcal{R}}$ operation would be extracting a maximum over orientations $\mathcal{R}$ or applying a softmax operation, as defined below,
\begin{equation} 
\begin{split}
    \max_{\mathcal{R}} f(\mathcal{R}) &= \lim_{n \rightarrow \infty} \sqrt[n]{\int_{\text{SO(3)}} f^n(\mathcal{R}) d \mathcal{R}} 
    \\
    \text{softmax}_{\mathcal{R}} f(\mathcal{R}) &= \frac{\int_{\text{SO(3)}} \text{relu}(f(\mathcal{R}))^2 d \mathcal{R}}{\int_{\text{SO(3)}} \text{relu}(f(\mathcal{R})) d \mathcal{R}}
\end{split}
.
\label{eq:hard_and_softmax_definition}
\end{equation}

Attempting to incorporate such a convolution in a neural network, we face several challenges. 
\begin{enumerate}
    \item If we assume $g(\vec{r})$ to be a learnable filter, it is not trivial to guarantee the correct backpropagation from multiple orientations of the filter to the original orientation  of the filter \(g(\vec{r})\).
    \item Finding the hard- or soft- maximum in the pooling operation  in the discrete case requires a consideration of a large number of rotations 
    in the SO(3) space to reduce the deviation of the sampling maximum from the true maximum. To make the method feasible we need to avoid performing the 3D convolution for each of these rotations.
\end{enumerate}
\subsection{Method}
Any square-integrable function  on a unit sphere  $g(\Omega) : S^2 \rightarrow \mathbb{R}$ can be expanded as a linear combination of  
 spherical harmonics  $Y^m_l(\Omega)$ of  degrees $l$ and orders $k$ as
\begin{equation}
    g(\Omega) = \sum_{l=0}^{\infty} \sum_{m = -l}^{l} g^m_l Y^m_l(\Omega).
    \label{sph_harm_decomposition}
\end{equation}
The expansion coefficients $f_l^m$ can then be obtained by the following integrals,
\begin{equation}
g_l^m = \int_{S^2} g(\Omega) Y^m_l(\Omega) d \Omega.
\end{equation}

Wigner matrices $D^l_{m_1 m_2}(\mathcal{R})$ are defined for  $\mathcal{R} \in \text{SO(3)}$ and provide a representation of the rotation group SO(3) in the space of spherical harmonics:
\begin{equation}
    Y^{m_1}_l(\mathcal{R}\Omega) =  \sum_{m_2 = -l}^{l} D^l_{m_1 m_2}(\mathcal{R}) Y^{m_2}_l(\Omega)
    .
\end{equation}
Since Wigner matrices are orthogonal, i.e.,
\begin{equation}
\int_{\text{SO(3)}}  D^l_{m_1 m_2}(\mathcal{R}) D^{l'}_{k'_1 k'_2}(\mathcal{R}) d \mathcal{R} = \frac{8 \pi^2}{ 2l+1} \delta_{l l'} \delta_{m_1 m_1'} \delta_{m_2 m_2'},
\label{wm_orth}
\end{equation}
any square-integrable function $h(\mathcal{R}) \in  L^2(SO(3))$
can be decomposed into them as
\begin{equation}
    h(\mathcal{R}) = \sum_{l=0}^{\infty} \sum_{m_1 = -l}^{l} \sum_{m_2 = -l}^{l} h^l_{m_1 m_2}  D^l_{m_1 m_2}(\mathcal{R}),
    \label{eq:so3_reconstruction}
\end{equation}
where the expansion coefficients $h^l_{m_1 m_2}$ are obtained by integration as
\begin{equation}
    h^l_{m_1 m_2} = \frac{2l+1}{8 \pi^2} \int_{\text{SO(3)}} h(\mathcal{R}) D^l_{m_1 m_2}(\mathcal{R}) d \mathcal{R}.
    \label{wm_decomposition}
\end{equation}

Let us now consider the following decomposition of a function \(h(\vec{\Delta},\mathcal{R}) \),
\begin{equation}
    h(\vec{\Delta}, \mathcal{R}) = \sum_{l, m_1, m_2}  h^l_{m_1 m_2}(\vec{\Delta})  D^l_{m_1 m_2}(\mathcal{R}).
    \label{eq:h_decomposition} 
\end{equation}
Inserting the spherical harmonics decomposition of the rotated kernel $g(\mathcal{R} \vec{r})$ in Eq. \ref{eq:6dconv}, we obtain
\begin{equation}
    h(\vec{\Delta}, \mathcal{R}) = 
    \int_{\mathbb{R}^3} f(\vec{r} + \vec{\Delta}) \sum_{l m_1 }g_l^{m_1} (r) Y_l^{m_1}(\mathcal{R} \Omega_{r}) d \vec{r} =  
    \int_{\mathbb{R}^3} f(\vec{r} + \vec{\Delta}) \sum_{l m_1}g_l^{m_1} (r) 
     \sum_{m_2 } D^l_{m_1 m_2}(\mathcal{R}) Y_l^{m_2}( \Omega_{r}) d \vec{r} ,
\end{equation}
where \((r, \Omega_r)\) are the radial and the angular components of the vector \(\vec{r}\).
Changing the order of operations, we get the following expression,
\begin{equation}
    h(\vec{\Delta}, \mathcal{R}) = \sum_{l m_1 m_2 } D^l_{m_1 m_2}(\mathcal{R}) \int_{\mathbb{R}^3} f(\vec{r} + \vec{\Delta}) g^l_{m_1 m_2} (\vec{r}) d \vec{r},
    \label{kernel_coefs_in_eq}
\end{equation}
where we introduce expansion coefficients $g^l_{m_1 m_2}(\vec{r})$ at a point $\vec{r}$ with the radial and angular components $(r, \Omega_r)$ as
\begin{equation}
g^l_{m_1 m_2}(\vec{r}) = g_l^{m_1} (r) Y_l^{m_2} (\Omega_r).
\label{eq:glk1ks}
\end{equation}

Consequently, equating Eq. \ref{eq:h_decomposition} to Eq. \ref{kernel_coefs_in_eq} and applying orthogonal conditions from Eq. \ref{wm_orth} on both sides, we obtain
\begin{equation}
h^l_{m_1 m_2}(\vec{\Delta}) = \int_{\mathbb{R}^3} f(\vec{\Delta} + \vec{r}) g^l_{m_1 m_2} (\vec{r}) d \vec{r}
.
\label{eq:ConvInRefFrame}
\end{equation}

In summary, our method comprises four steps, as depicted in Figure \ref{fig:ilfoscheme}:
\begin{enumerate}
    \item \textbf{Tensor product} of \(g_l^{m_1} (r) \) and \(Y_l^{m_2} (\Omega_r)\), where \(g_l^{m_1} (r) \) are learnable filters (see Eq. \ref{eq:glk1ks}).
    \item \textbf{3D convolution} involving \(g^l_{m_1 m_2}(\vec{r})\) and \(f(\vec{r})\), with \(f(\vec{r})\) representing the input data (refer to Eq \ref{eq:ConvInRefFrame}).
    \item \textbf{Wigner reconstruction} of \( h(\vec{\Delta}, \mathcal{R})\)  (see Eq. \ref{eq:h_decomposition}).
    \item \textbf{Orientation pooling} as detailed in Eq. \ref{eq:hard_and_softmax_definition}.
\end{enumerate}
By employing these steps, we reduce the computational complexity 
through the utilization of Wigner matrices following the 3D convolution. 
The connection between the number the sampled points in $SO(3)$ and the number of coefficients is elaborated upon in Appendix \ref{app:upper_bound}. Furthermore, subsection \ref{orientation_invariance} presents an empirical examination of how these quantities influence the maximum sampling error. Appendix \ref{app:discrete_implementation} provides details of the implementation of the method in the discrete case.

\section{Results}
As mentioned in the introduction, we have specifically designed our model for regular volumetric data. Benchmarking our method on irregular representation would require significant modifications of the model that are out of the scope of the present paper. Therefore, we chose two representative benchmarks from different application domains: CATH and MedMNIST3D, described below in more detail. We also conducted additional experiments to examine the properties of our operations.

\subsection{Orientation invariance}
\label{orientation_invariance}
\begin{figure}[htbp]
    \centering
    \includegraphics[width=0.8\textwidth]{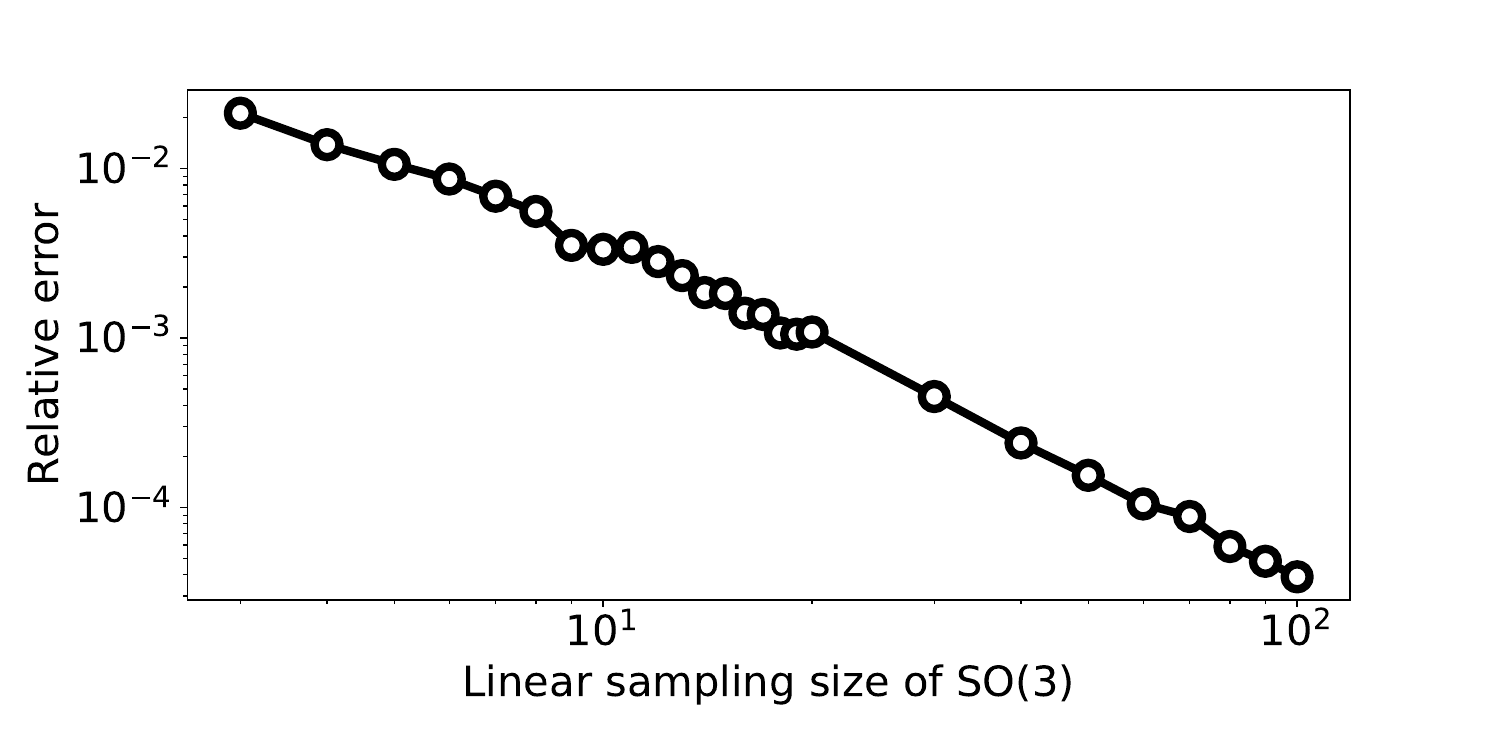}
    \caption{Standard deviation of sampled maxima relative to the true function maximum ($y$-axis) as a function of sampling size $K$ in the SO(3) space ($x$-axis).
    }
    \label{fig:invariance}
\end{figure}

To investigate the sensitivity of orientation-independent pattern detection to the linear size of sampling, we conducted the following experiment. 
We initiate a function in the SO(3) space with a Wigner matrix decomposition up to a maximum degree of $2$ (\(L=3 \)). Concretely, we initiate it by random generation of its decomposition coefficients.
To probe the function's behavior under various orientations, 
we applied $100$ random rotations to it, producing a collection of rotated copies. 
For each of these rotated versions, we found
its sampled maximum over the SO(3) space  with the sampling size \( K \). 
Aggregating these maxima across all rotations allowed us to determine their standard deviation.

Figure \ref{fig:invariance} shows the normalized standard deviation (relative to the true maximum of the initial function) as a function of the linear sampling size \( K \). 
Even for relatively small values of \( K = L \), the ratio between the standard deviation of the maxima and the true maximum hovers around \( 10^{-2} \). 
This implies that the deviation of the sampling maximum from the true maximum remains minimal, underscoring the reliability of our orientation-independent pattern detection across varying sampling resolutions.

\subsection{Experiments on the CATH Dataset}

For our first experiment, we chose a volumetric voxelized dataset from \citep{weiler2018} composed of 3D protein conformations classified according to the CATH hierarcy.
The CATH Protein Structure Classification Database provides a hierarchical classification of 3D conformations of protein domains, i.e., compact self-stabilizing protein regions that folds independently \citep{Knudsen2010}. 
The dataset
considers the "architecture" level in the CATH hierarchy, 
version 4.2 (see \url{http://cathdb.info/browse/tree)}.
It focuses on "architectures" with a minimum of 700 members, producing ten distinct classes. 
All classes are represented by the same number of proteins. Each protein in the dataset is described by its alpha-carbon positions that are placed on the volumetric grid of the linear size $50$. 
The dataset is available at \url{https://github.com/wouterboomsma/cath_datasets}
 \citep{weiler2018}.
For benchmarking, the authors of the dataset also provide  a 10-fold split ensuring the variability of proteins from different splits.

For the experiment, we constructed three architectures (ILPONet, ILPONet-small, and ILPONet-tiny) with different numbers of trainable parameters, and also tested the two types of pooling operations.
ILPONet, ILPONet-small, and ILPONet-tiny replicate the architecture of ResNet-34 \citep{He2016}, but they impliment the novel convolution operation with 4, 8, and 16 times fewer channels on each layer, respectively. 
We conducted experiments for two types of orientation pooling with $K = 4$  for the softmax version, and $K = 7$ for the hardmax version.

We compared the performance of ILPO-Net (our method) with two baselines: ResNet-34 and its equivariant version, ResNet-34 with Steerable filters, whose performance was demonstrated in \citep{weiler2018} where the dataset was introduced. 
Table \ref{tab:cath_results} lists the accuracy (\textbf{ACC}) and the number of parameters(\textbf{\# of params}) of different tested methods. Since the classes in the dataset are balanced, we can use accuracy as the sole metric to evaluate the precision of predictions.

\begin{table}[h]
    \centering
    \begin{tabular}{|l|c|c|}
        \hline
        \textbf{Method} & \textbf{ACC} & \textbf{\# of params} \\
        \hline
        ResNet-34 \citep{He2016} & 0.61 & 15M \\
        Steerable ResNet-34 \citep{weiler2018} & 0.66 & 150K \\
        \hline
        \hline
        ILPONet-34(hardmax)  & \textbf{0.74} & 1M \\
        ILPONet-34(softmax)  & 0.74 & 1M \\
        ILPONet-34(hardmax)-small & 0.73 & 258k \\
        ILPONet-34(softmax)-small & 0.72 & 258k \\
        ILPONet-34(hardmax)-tiny & 0.68 & 65k \\
        ILPONet-34(softmax)-tiny & 0.70 & 65k \\
        \hline
    \end{tabular}
    \caption{Performance comparison of various methods on the CATH dataset.}
    \label{tab:cath_results}
\end{table}

As shown in Table \ref{tab:cath_results}, 
all versions of ILPO-Net outperform both baselines on the CATH dataset. 
Furthermore, when comparing the number of parameters, even the smallest variant of ILPO-Net  achieves a better accuracy, while having substantially fewer parameters than the equivariant baseline, Steerable Network.

\textbf{Technical details:} We used the first 7 splits for training, 1 for validation, and 2 for testing following the protocol of \citet{weiler2018}. We trained our models for 100 epochs with the Adam optimizer \citep{kingma2014adam} and an exponential learning rate decay of $0.94$ per epoch starting after an initial burn-in phase of 40 epochs. We used a $0.01$ dropout rate, and $L1$ and $L2$ regularization values of $10^{-7}$. For the final model, we chose the epoch where the validation accuracy was the highest. Table \ref{tab:cath_results} shows the performance on the test data. 
We based our experiments on the framework provided by \citet{weiler2018} in their \textbf{se3cnn} \href{https://github.com/mariogeiger/se3cnn/tree/546bc682887e1cb5e16b484c158c05f03377e4e9}{repository}. We introduced our ILPO operator into the provided setup for training and evaluation.

\subsection{Experiments on MedMNIST Datasets}

For the second experiment, 
we selected MedMNIST v2,
a vast MNIST-like  collection of standardized biomedical images \citep{Yang2023}. 
This collection covers 12 datasets for 2D and 6 datasets for 3D images. 
Preprocessing reduced all images into the standard size of \(28 \times 28\) for 2D and \(28 \times 28 \times 28\) for 3D, each with its corresponding classification labels.
MedMNIST v2 
data are supplied
with tasks ranging from binary/multi-class classification to ordinal regression and multi-label classification. 
The collection, in total, consists of 708,069 2D images and 9,998 3D images. For this study, we focused only on the 3D datasets of MedMNIST v2.

As the baseline, we used the same models as the authors of the collection tested on 3D datasets.
These are multiple versions of ResNet \citep{He2016} with 2.5D/3D/ACS \citep{Yang2021} convolutions and  open-source AutoML tools, auto-sklearn \citep{Feurer2019}, AutoKeras \citep{Jin2019}, FPVT \citep{Liu2022}, and Moving Frame Net \citep{sangalli2023moving}.
As in the previous experiment, we constructed and trained multiple architectures (ILPONet, and ILPONet-small) of different size with two versions of the orientation pooling operation.
They repeat the sequence of layers  in ResNet-18 and ResNet-50 but they  do not reduce the size of the spatial input dimension throughout the network.

The models {ILPONet-small} and {ILPONet} keep 4 and 8 feature channels, respectively, throughout the network. We tested these architectures for both soft- and hardmax orientation pooling strategies.
Table \ref{tab:medmnist_results} lists the performance of our models compared to the baselines. 
Here, the classes are not balanced. Therefore, the accuracy (\textbf{ACC})  cannot be the only indicator of the prediction precision, and we also consider AUC-ROC(\textbf{AUC}) that is more revealing.
We can see that ILPOResNet models, even with a substantially reduced number of parameters, demonstrate competitive or superior performance compared to traditional methods on the 3D datasets of MedMNIST v2.

\textbf{Technical details:}  For each dataset, we used the training-validation-test split provided by \citet{Yang2023}. 
We utilized the Adam optimizer with an initial learning rate of $0.0005$ and trained the model for $100$ epochs, delaying the learning rate by $0.1$ after $50$ and $75$ epochs. The dropout rate was $0.01$. To test the model, we chose the epoch corresponding to the best \textbf{AUC} on the validation set. We based our experiments on the framework provided by \citet{Yang2023} in their \textbf{MedMNIST} \href{https://github.com/MedMNIST/experiments}{repository}. We introduced our ILPO operator into their setup for training and evaluation.

\begin{table}[h]
    \centering
    \resizebox{\textwidth}{!}{
    \begin{tabular}{|l|c|c|c|c|c|c|c|c|c|c|c|c|c|}
        \hline
        \textbf{Methods} &\textbf{\# of params} & \multicolumn{2}{c|}
        {\textbf{Organ}} & \multicolumn{2}{c|}{\textbf{Nodule}} & \multicolumn{2}{c|}{\textbf{Fracture}} & \multicolumn{2}{c|}{\textbf{Adrenal}} & \multicolumn{2}{c|}{\textbf{Vessel}} & \multicolumn{2}{c|}{\textbf{Synapse}} \\
        & & AUC & ACC & AUC & ACC & AUC & ACC & AUC & ACC & AUC & ACC & AUC & ACC \\
        \hline
        ResNet-18 \citep{He2016} + 2.5D\citep{Yang2021} & 11M &	0.977&	0.788&	0.838&	0.835&	0.587&	0.451&	0.718&	0.772	&0.748&	0.846&	0.634&	0.696\\
        ResNet-18  \citep{He2016}+ 3D\citep{Yang2021} &	33M &\textbf{0.996}&	\textbf{0.907}&	0.863	&0.844	&0.712&	0.508&	0.827	&0.721	&0.874&	0.877&	0.820	&0.745\\
        ResNet-18  \citep{He2016}+ ACS\citep{Yang2021} 	& 11M& 0.994	&0.900	&0.873&	0.847&	0.714	&0.497&	0.839&	0.754&	0.930&	0.928&	0.705&	0.722\\
        ResNet-50  \citep{He2016}+ 2.5D\citep{Yang2021} & 15M &	0.974	&0.769&	0.835&	0.848&	0.552&	0.397&	0.732&	0.763	&0.751	&0.877&	0.669&	0.735\\
        ResNet-50  \citep{He2016}+ 3D\citep{Yang2021} 	& 44M &0.994&	0.883&	0.875	&0.847	&0.725	&0.494&	0.828	&0.745&	0.907	&0.918	&0.851	&0.795\\
        ResNet-50  \citep{He2016}+ ACS\citep{Yang2021} 	& 15M& 0.994	&0.889	&0.886&	0.841&	0.750&	0.517&	0.828	&0.758&	0.912&	0.858&	0.719&	0.709\\
        auto-sklearn$^*$  \citep{Feurer2019}& -&	0.977&	0.814&	\textbf{0.914}&	\textbf{0.874}	&0.628	&0.453&	0.828	&0.802	&0.910	&0.915&	0.631&	0.730\\
        AutoKeras$^*$ \citep{Jin2019}	& -& 0.979	&0.804&	0.844&	0.834&	0.642&	0.458	&0.804	&0.705&	0.773&	0.894&	0.538&	0.724\\
        FPVT$^*$  \citep{Liu2022}      & - &0.923  &0.800& 0.814 & 0.822&  0.640&  0.438   &0.801  &0.704& 0.770&  0.888&  0.530& 0.712\\
        SE3MovFrNet $^*$  \citep{sangalli2023moving} & - & -  &0.745& - & 0.871&  -&  0.615   &-  &0.815& -&  \textbf{0.953}&  -& 0.896\\
        \hline
        \hline
        ILPOResNet-18(softmax)-small & 7k &	0.960 &	0.631 &	0.887 &	0.848 &	0.791 &	0.579 &	0.897 &	0.805 &	0.815 &	0.838 &	0.804 &	0.517\\
        ILPOResNet-18(hardmax)-small & 7k &	0.951 &	0.600 &	0.906 &	0.861 &	\textbf{0.808} &	\textbf{0.642} &	0.870 &	0.792 &	0.925 &	0.908 &	0.825 &	0.750\\
        ILPOResNet-18(softmax) &  29k &0.967 & 	0.716 & 0.894 &	0.871 &	0.761 &	0.558 &	\textbf{0.910} &	\textbf{0.856} &	0.908 &	0.919 &	0.836 &	 0.815\\
        ILPOResNet-18(hardmax) &  29k &0.971 &	0.705 &	0.900 &	\textbf{0.874} &	0.773 &	0.580 &	0.897 &	0.846 &	0.927 &	0.908 &	0.800 &	0.767\\
        ILPOResNet-50(softmax)-small & 10k &	0.979&	0.757&	0.902&	0.865&	0.772	&0.558&	0.864&	0.745&	0.864	&0.890&	0.880 &	0.844\\
        ILPOResNet-50(hardmax)-small & 10k&	0.981&	0.780&	0.887	&0.861&	0.768&	0.571&	0.841&	0.792&	\textbf{0.937}	&0.901&	0.861	&0.784\\
        ILPOResNet-50(softmax)& 38k       &0.992&	0.879	&0.912&	0.871	&0.767&	0.608&	0.869	&0.809&	0.829	&0.851 &\textbf{0.940}	&\textbf{0.923}\\
        ILPOResNet-50(hardmax)& 38k &0.975 &0.754 &	0.911 &	0.839 &	0.769 &	0.521 &	0.893 &	0.842 &	0.902 &	0.885 &	0.885 &	0.858\\

        \hline
    \end{tabular}
    }
    \caption{Comparison of different methods on MedMNIST's 3D datasets. ($^*$) For these methods, the number of parameters is unknown. The best accuracies (ACC) and ROC-areas under curve (AUC) are highlighted in bold.}
    \label{tab:medmnist_results}
\end{table}

\subsection{Filter demonstration}
\begin{figure}[htbp]
\centering
\includegraphics[width=1\textwidth]{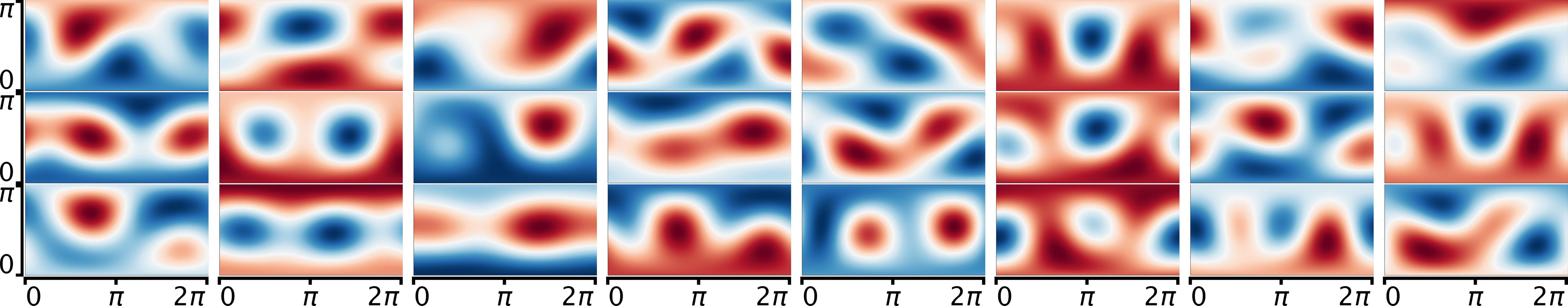}
\caption{Visualization of filters from the 1st ILPO layer of ILPONet-50. Each column corresponds to different output channels, with rows indicating different radii and input channels. Given that the first ILPO layer only has one input channel, only three projections (radii) are shown in each column. $x$ and $y$ axes correspond to the azimuthal and polar angles, correspondingly. The filters' values are shown in the Mercator projection.
The red color corresponds to the positive values, and the blue color to the negative ones.}

\label{fig:filter_layer1}
\end{figure}
For a deeper understanding of our models, it is useful to delve into the visualizations of their filters. 
Of the numerous experiments conducted, we opted to focus on the MedMNIST experiments, primarily due to the smaller size of the trained models (in terms of parameter count). 
Within the MedMNIST collection, we chose the  Synapse dataset
because of its more sophisticated and variable patterns and analyzed  
the filters from the top-performing ILPONet-50 model with the softmax orientation pooling.
This architecture employs ILPO convolutional layers, each having a filter size of \(L=3\). Here, we  demonstrate filters from the first and the last ILPO layers.
Depending on the radius (\(r\)), these filters could represent a single point (for \(r=0\)) or  spheres for other radii values. 
We use the {Mercator} projection
to show values on the filters' spheres for \(r>0\) in two spherical angles,  azimuthal and polar.

Figure \ref{fig:filter_layer1} shows the first ILPO layer. The layer has a single input channel. 
Different rows correspond to different radii(\(r = 1,\sqrt{2}, \sqrt{3}\)), whereas each column corresponds to a different output channel.
Appendix \ref{app:visualization} also shows the last  ILPO layer.

These figures demonstrate a variety of memorized patterns. 
We can see no spatial symmetry in the filters
and  that the presented model can learn filters of arbitrary shape.
Interestingly, we cannot spot a clear difference between the filters of the first and the last layers.

\section{Discussion and Conclusion}

In real-world scenarios, data augmentation is commonly employed to achieve rotational and other invariances 
of DL models.
While this method may significantly increase the dataset's size and the number of parameters, it 
can also limit the expressivity and explainability of the obtained models.
Using invariant methods by design is a valid alternative that
ensures consistent neural network performance.
The filter representation introduced here can also be employed in an equivariant architecture.  It will lead to a higher complexity of operations and an increased number of parameters but may give better expressiveness to the final model.

To conclude, we proposed the ILPO-NET approach that efficiently manages arbitrarily shaped patterns, providing inherent invariance to local spatial pattern orientations through the novel convolution operation. 
When tested against  several volumetric datasets, ILPO-Net demonstrated state-of-the-art performance with a remarkable reduction in parameter counts.
Its potential extends beyond the tested cases, with promising applications across multiple disciplines.

\section*{Acknowledgments}

We  wish to express our  profound gratitude  to Sylvain Meignen, associate professor at Grenoble-INP, for his invaluable advice.  We would also like to thank Jérôme Malick, Laurence Wazné, and Kliment Olechnovič for their support during the study.
This work was partially supported by MIAI@Grenoble Alpes (ANR-19-P3IA-0003).

\bibliography{templateArxiv}
\bibliographystyle{plainnat}

\section*{Appendix}
\appendix
\section{Limitation of expressiveness in Steerable Networks}
\label{app:limitation_expressiveness}

Convolution operations, foundational to modern Convolutional Neural Networks (CNNs), serve as a mechanism for detecting patterns in input data. In the traditional convolution, higher activation values in the feature map indicate regions in the input where there is a significant match with the convolutional filter, thereby signaling the presence of a targeted pattern.

Let us consider how this mechanism works in convolutions with steerable filters \citep{weiler2018}. The steerable filter that maps between irreducible features (\(i \rightarrow l\)) is defined as: 
\begin{equation}
    \kappa_{il}(\vec{r}) = \sum_{L = |i-l|}^{i+l} \sum_{n=0}^{N-1} w_{il,Ln}  \kappa_{il,Ln}(\vec{r})
    ,
\end{equation}
where \(\kappa_{il}(\vec{r}): \mathbb{R}^3 \rightarrow \mathbb{R}^{(2i+1)(2l+1)}\). Here, $w_{il,Ln}$ are learnable weights and  $\kappa_{il,Ln}$ are basis functions given by:
\begin{equation}
\kappa_{il,Ln}(\vec{r}) = Q^{ilL} \eta_{L n}  (\vec{r}),
\end{equation}
where \(Q^{ilL} \in \mathbb{R}^{(2i+1)(2l+1)\times(2L+1)}\) is the 3-dimensional tensor with Clebsch-Gordon coefficients and
\begin{equation}
\eta_{L n}  (\vec{r}) = \phi_n(r) Y_L(\Omega_r),
\end{equation}
\(\eta_{L n}  (\vec{r}): \mathbb{R}^3 \rightarrow \mathbb{R}^{(2L+1)} \) and $Y_L(\Omega_r)$ is a vector with spherical harmonics of degree $L$.
Functions \(\phi_n (n = 0,...,N-1)\) form a radial basis.
For a scalar field as input data and considering the special case \(l=0\), the filter reduces to 
\begin{equation}
\kappa_{i0}(\vec{r}) = \sum_{n=0}^{N-1} w_{i0,in} \phi_n(r) Y_i(\Omega_r),
\end{equation}
where \(\kappa_{i0}(\vec{r}): \mathbb{R}^3 \rightarrow \mathbb{R}^{(2i+1) \times 1}\).

Let us apply the convolution to the following input function,
\begin{equation}
f(\vec{r}) = \sum_{i=0}^{L_{\max}} \sum_{m = -i}^{i} \sum_{n = 0}^{N} f^m_{in} \phi_n(r) Y^m_i(\Omega_r),
\end{equation}
where indices $i,m$ correspond to the angular decomposition and $n$ is a radial index.
Without loss of generality for the final conclusion, 
let us consider a special case when coefficients $f^m_{in}$ can be expressed as a product: \(f^m_{i n} = f^m_{i} q_{i n}\). 
We also assume that  the pattern presented by this function is localised and the function is defined in a cube. 
The filter \(\kappa_{i0}(\vec{r})\) is localised in a cube of the same size. 
If we use the integral formulation, 
the result of the convolution operation at the center of the pattern will be:
\begin{equation}
h^m_i  = \int_{0}^{\infty} \int_{S^2} f(\vec{r}) [\kappa_{i0}(\vec{r})]_{m0} d \Omega_r r^2dr = f^m_{i} \sum_{n=0}^{N-1} w_{i0,in} q_{i n}.
\end{equation}
Then, according to the logic of the convolution layer, a nonlinear operator is applied to the convolution result, which zeros the low signal level. 
Let us consider two types of nonlinearities used in 3D Steerable networks: 
gated- and norm-nonlinearity. 
In these operators, the high-degree output of the convolution result (\(h^m_i, i>0\)) is multiplied with \(\sigma(h_0^0)\) and \(\sigma(\|h_i\|)\), respectively, 
where \(\sigma\) is an activation function and \(\|h_i\| = \sqrt{\sum_{m = -i}^{i} (h_i^m)^2} = |\sum_{n=0}^{N-1} w_{i0,in} q_{i n}| \sqrt{\sum_{m=-i}^{i} (f^m_{i})^2 }  \) is the norm of the $i$th-degree coefficients . 
In the first case, the gated non-linearity does not distinguish patterns of different shapes if they have the same decomposition coefficients of the $0$th degree(\(f^0_{0 n}\)). 
The norm non-linearity brings more expressiveness for representations of the 1st-degree because if two sets of representations, \(\{f_1^{-1}, f_1^{0}, f_1^{1}\}\) and  \(\{[f']_1^{-1}, [f']_1^{0}, [f']_1^{1}\}\), have equal norms (\(\|f_1\| = \|[f']_1 \|\)), then $f_1$ can be retrieved from $[f']_1$ by a rotation or, in other words, they represent the same shape . 
However, this rule does not work for higher degrees (\(i\geq 2\)). 
For example, representations of the $2$nd-degree \(f_2 = \{1,0,0,0,0\}\) and \([f']_2 = \{0,0,1,0,0\}\) represent different shapes but have equal norms.

Accordingly, a single layer cannot cope with the recognition of an arbitrary pattern in the input data. Thus, the recognition task moves to the subsequent layers. 
However, on the second layer, there is an exchange between the voxel of the feature map where \(h_i\) is stored and other voxels that contain not only  the pattern information but also the pattern’s neighbors information. 
Therefore, the result of the central pattern recognition will not be unique but depends on the pattern neighbors.

\section{Limitation of summing up over rotations}
\label{app:limitation_of_summing_up_over_rotations}
Averaging (or summing) of a function in \(3D\) annihilates angular dependencies of a filter. 
Let us consider a filter defined by a function \(g(\vec{r})\): 
\begin{equation}
    g(\vec{r}) = \sum_{l = 0}^{L-1} g^k_l(r) Y^k_l(\Omega_r),
\end{equation}
where \((r, \Omega_r)\) are the radial and the angular components of the vector \(\vec{r}\), and \(g^k_l\)  are the spherical harmonic expansion coefficients of a function \(g(\vec{r})\).
We then rotate this function by 
\( \mathcal{R} \in \text{SO(3)} \) 
and convolve with \(f(\vec{r})\):
\begin{equation}
    h(\vec{\Delta}, \mathcal{R}) = \int_{\mathbb{R}^3}  f(\vec{\Delta} - \vec{r}) g(\vec{r}) d \vec{r}.
\end{equation}
If we integrate this result over all rotations in SO(3),
which is approximately equal to summing the function over a finite set of (equally distributed) rotations,
we obtain
\begin{equation}
    h(\vec{\Delta}) = \int_{\text{SO(3)}}  h(\vec{\Delta}, \mathcal{R}) d \mathcal{R} = 8 \pi^2 \int_{\mathbb{R}^3}  f(\vec{\Delta} - \vec{r}) g^0_0(r) d \vec{r},
\end{equation}
where $ g^0_0(r)$ are the zero-order expansion coefficients that equal to the mean value of the integrated function over the domain.
Thus, we can conclude that summing over all rotations in SO(3) of the result of the convolution with an arbitrary filter is equivalent to a convolution with a radially-symmetric filter. 
On the contrary, Eq. \ref{eq:ConvInRefFrame} allows us to keep the dependency of the convolution result on the filter's orientation.

\section{Implementation for the voxelized data}
\label{app:discrete_implementation}

\subsection{Discrete convolution}

Here we describe how the convolution introduced above can be discretized for use in a neural network with  {\em regular voxelized data}. 
Let us firstly define for each filter $g(\vec r)$, where \( \vec{r} = (x_i,y_j,z_k) \), a regular Cartesian grid of a linear size $L$: \(0 \leq i,j,k < L\).
This size also defines the maximum expansion order of the spherical harmonics expansion in Eq. \ref{sph_harm_decomposition}.
Let us also compute spherical coordinates $(r_{i j k},\Omega_{ijk}) $ 
for a voxel with indices \(i,j,k\) in the Cartesian grid with respect to the center of the filter.
For each of data voxel of radii $r_{ijk}$ inside the filter grid, with the origin in the center of the grid, we define  a filter $ g^l_{m_1 m_2}(x_i, y_j, z_k)$ 
and parameterize it with learnable coefficients
$g^m_l(r_{ijk})$
and non-learnable spherical harmonics basis functions according to Eq. \ref{eq:glk1ks}
. 
\begin{equation}
    g^l_{m_1 m_2}(x_i, y_j, z_k) = g_l^{m_1}(r_{ijk}) Y_l^{m_2} (\Omega_{ijk} ).
\end{equation}
After, we conduct a {\em discrete} version of the 3D convolution from Eq. \ref{eq:ConvInRefFrame}:
\begin{equation}
    h^l_{m_1 m_2} (x_i, y_j, z_k) = \sum_{i' = 0}^{L-1} \sum_{j' = 0}^{L-1} \sum_{z' = 0}^{L-1} f(x_{i+i'-L//2}, y_{j+j'-L//2}, z_{k+k'-L//2}) g^l_{m_1 m_2}(x_{i'}, y_{j'}, z_{k'})
    ,
\end{equation}
where $f (x_i, y_j, z_k)$ is the input voxelized data. 
This operation has a complexity of $O(N^3\times D_{\text{in}}\times D_{\text{out}} \times L^6)$, where the multiplier $L^6$ is composed of the size of the filter, $L^3$, and the number of  \(g^{l}_{m_1 m_2}\) coefficients $\propto L^3$ , $N$ is the linear size of the input data $f(\vec r)$ and $D_{in}$ and $D_{out}$ are the number  of the input and the output channels, respectively.
For the computational efficiency of our method, we always keep the value of $L$ fixed and small, independent of $N$.

To perform the Wigner matrix reconstruction in Eq. \ref{eq:so3_reconstruction}, 
we need to numerically integrate the $SO(3)$ space.
We can compute this integral {\em exactly} using the Gauss-Legendre quadrature scheme from $L$ points \citep{Khalid2016}.
It is convenient to represent a rotation in $SO(3)$ by a successive application of three Euler angles
$\alpha$, $\beta$ and $\gamma$, about the axes Z, Y and Z, respectively.
Then, the  Wigner matrix
$ D^l_{m_1 m_2} (\mathcal{R})$ can be expressed as a function of three angles: $D^l_{m_1 m_2} (\mathcal{R}) =  D^l_{m_1 m_2} (\alpha, \beta, \gamma)$ and written as a sum of two terms:
\begin{equation}
\label{eq:WignerMatrixExpression}
    D^l_{m_1 m_2} (\alpha, \beta, \gamma) = C_{m_1}(m_1 \alpha) [d_1]^l_{m_1 m_2} (\beta) C_{m_2}(m_2 \gamma) + C_{-m_1}(m_1 \alpha) [d_2]^l_{m_1 m_2} (\beta) C_{-m_2}(m_2 \gamma)
    ,
\end{equation}
where $[d_i]^l_{m_1 m_2}, i = 1,2$ can be decomposed into associated Legendre polynomials $P_l^m(\cos(\beta)), 0\leq m< l$ , and $C_{m}$ is defined as follows:
\begin{equation}
\label{eq:CosSinExpression}
    C_m(x) =
    \begin{cases}
    \cos(x), & m \geq 0 \\
    \sin(x), & m < 0
    \end{cases}.
\end{equation}
Given such a form of $D^l_{m_1 m_2} (\alpha, \beta, \gamma)$, we discretize the space of rotations $SO(3)$ as a 3D space with dimensions along the $\alpha, \beta$ and $\gamma$ angles. The dimensions $\alpha$ and $\gamma$ have a regular division.
{We use the Gauss–Legendre quadrature to discretize  $\cos(\beta)$ to define the $\beta$ dimension. 
Then, we perform the discrete version of the summation in Eq. \ref{eq:h_decomposition}:
\begin{multline}
    h(x_i, y_j, z_k, \alpha_q, \beta_r, \gamma_s) = \sum_{m_2 = -l}^{l} C_{m_2}(m_2 \gamma_s)(\sum_{m_1 = -l}^{l} C_{m_1}(m_1 \alpha_q) (\sum_{l = 0}^{L-1}  [d_1]^l_{m_1 m_2} (\beta_r) h^l_{m_1 m_2} (x_i, y_j, z_k))) + \\ \sum_{m_2 = -l}^{l} C_{-m_2}(m_2 \gamma_s)(\sum_{m_1 = -l}^{l} C_{-m_1}(m_1 \alpha_q) (\sum_{l = 0}^{L-1}  [d_2]^l_{m_1 m_2} (\beta_r) h^l_{m_1 m_2} (x_i, y_j, z_k)))
    ,
\end{multline}
where $0 \leq q,r,s \leq K-1$, $K$ is the linear size of the $SO(3)$ space discretization.  
If we assume that $L<K$, then the complexity of the reconstruction is $O(N^3 \times D_{\text{out}} \times K^3 \times L)$, where $N$ is the linear size of the input data $f(\vec r)$, and $D_{in}$ and $D_{out}$ are the number  of the input and the output channels, respectively.
We shall specifically note that this operation has a lower complexity compared to the case of Eq. \ref{eq:6dconv}, if the latter  is calculated with a brute-force approach provided that the number of sampled points in the $SO(3)$ space $K^3>>L^3$.

\subsection{Orientation pooling}
\label{subsec:OrientationPooling}
For the orientation pooling operation, we have considered two nonlinear operations, hard maximum and soft maximum defined in Eq. \ref{eq:hard_and_softmax_definition}. While only $L^3$ points in the $SO(3)$ space are sufficient to find
the exact value of the integration of functions $h(x_i, y_j, z_k, \alpha, \beta, \gamma)$, many more points are required to approximate the integration of $\text{relu}(h(x_i, y_j, z_k, \alpha, \beta, \gamma))^2$ or $h^n(x_i, y_j, z_k, \alpha, \beta, \gamma)$ . There is not a closed-form dependency between $K$, $L$ and $\epsilon$, the error of discrete approximation of integrals in Eq. \ref{eq:hard_and_softmax_definition} on the grid of $K^3$ points.  However, we need to ensure that the deviation of the sampling maximum from the real maximum for a given sampling division $K$ is bounded.  
For this purpose we introduce  lemmas and theorems in Appendix \ref{app:upper_bound}.

Theorem \ref{theorem:hardmax_upper_bound} provides an upper bound for the error of the sampled maximum. The softmax is limited by the hard maximum value for the continuous and discrete cases, consequently the sampled softmax error is also bounded.  
We also deduced an {\em empirical} relationship between the error and parameters $L$ and $K$ for both operations. 
For example, for $L=3$ the error of the softmax approximation follows the relation $\epsilon = 4 K^{-3}$. 
Therefore, for $\epsilon=0.1, 0.05$ or $0.01$ we need to consider $K = 4,5$ or $7$, respectively. 
The error of the sampling hardmax  is approximately $2.75 K^{-2}$  if $L=3$. 
It means that $K = 7,9$ or $30$ will give $\epsilon=0.1, 0.05$ or $0.01$ respectively.

The discrete calculation of the hard maximumum does not differ from the continuous case. 
The discrete form of the soft maximum operation has the following expression:
\begin{equation}
   \text{softmax}_{\mathcal{R}} f(x_i, y_j, z_k, \mathcal{R}) = \frac{\sum_{q,r,s}  w_r \text{relu} (h(x_i, y_j, z_k, \alpha_q, \beta_r, \gamma_s))^2 }{\sum_{q,r,s} w_r \text{relu} (h(x_i, y_j, z_k, \alpha_q, \beta_r, \gamma_s))}
,    
\end{equation}
where $w_r $ are the Gauss–Legendre quadrature weights.

\section{Upper bound of the sampling maximum error }
\label{app:upper_bound}
\begin{lemma}
\label{lemma:lc_for_sh}
Let $Y_l^k(\theta, \phi)$ be the spherical harmonic function of degree $l$ and order $k$. Then, the Lipschitz constant $L$ of $Y_l^k(\theta, \phi)$ is bounded by:
\[ L \leq \sqrt{l(l+1)} \]
\end{lemma}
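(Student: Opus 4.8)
The plan is to identify the Lipschitz constant of $Y_l^k$ with the supremum of its intrinsic gradient on the sphere, and then to bound that gradient pointwise using the fact that $Y_l^k$ is an eigenfunction of the Laplace--Beltrami operator. Since $Y_l^k$ is smooth, the mean value inequality along geodesics gives $|Y_l^k(\Omega_1)-Y_l^k(\Omega_2)| \le \big(\sup_{S^2}|\nabla_{S^2} Y_l^k|\big)\, d(\Omega_1,\Omega_2)$, where $d$ is the geodesic distance and $\nabla_{S^2}$ the intrinsic gradient. Hence $L = \sup_{S^2}|\nabla_{S^2} Y_l^k|$, and it suffices to show $|\nabla_{S^2} Y_l^k|^2 \le l(l+1)$ at every point.

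First I would rewrite the gradient through the rotation generators (angular-momentum operators) $\hat L_x,\hat L_y,\hat L_z$, which satisfy $\hat L_x^2+\hat L_y^2+\hat L_z^2 = -\Delta_{S^2}$ and, pointwise for real $f$, $|\nabla_{S^2} f|^2 = |\hat L_x f|^2+|\hat L_y f|^2+|\hat L_z f|^2$ (the cross product with the unit radial vector is an isometry of each tangent plane). On the spherical-harmonic basis these operators act diagonally or as ladders: $\hat L_z Y_l^k = k\,Y_l^k$ and $\hat L_\pm Y_l^k = c_\pm\,Y_l^{k\pm1}$ with $c_\pm = \sqrt{l(l+1)-k(k\pm1)}$. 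Using the parallelogram identity $|\hat L_x f|^2+|\hat L_y f|^2 = \tfrac12\big(|\hat L_+ f|^2+|\hat L_- f|^2\big)$, which cancels the cross terms, I obtain the exact pointwise expression
\begin{equation}
|\nabla_{S^2} Y_l^k|^2 = \tfrac12 c_+^2\,|Y_l^{k+1}|^2 + \tfrac12 c_-^2\,|Y_l^{k-1}|^2 + k^2\,|Y_l^k|^2 .
\end{equation}
The key algebraic fact is that the three coefficients sum to the eigenvalue, $\tfrac12(c_+^2+c_-^2)+k^2 = l(l+1)$, which is where the target constant comes from.

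To finish, I would bound each magnitude by its maximum, $|Y_l^m| \le \|Y_l^m\|_\infty \le 1$ in the normalization used here, so that the right-hand side is dominated by the sum of its prefactors, giving $|\nabla_{S^2} Y_l^k|^2 \le \tfrac12(c_+^2+c_-^2)+k^2 = l(l+1)$, exactly the claim. As a consistency check, the zonal case $k=0$ can be verified directly: with $u=P_l(\cos\theta)$ one sets $G(x):=(1-x^2)(P_l')^2+l(l+1)P_l^2$ and, using Legendre's equation, finds $G'(x)=2x(P_l')^2$, so $G$ peaks at the poles with $G(\pm1)=l(l+1)$ and $|\nabla_{S^2} u|^2 = G - l(l+1)P_l^2 \le l(l+1)$.

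I expect the main obstacle to be the pointwise control of the harmonics' magnitudes, i.e.\ justifying $\|Y_l^m\|_\infty \le 1$ for the adopted convention rather than the $L^2$-orthonormal one. In the orthonormal convention the only clean pointwise statement is the addition theorem $\sum_m |Y_l^m|^2 = \tfrac{2l+1}{4\pi}$, which would inflate the constant by a factor $\sqrt{(2l+1)/4\pi}$; so the bound $\sqrt{l(l+1)}$ is genuinely tied to a max-one normalization, and I would make that normalization explicit at the outset. A self-contained alternative that sidesteps the normalization issue is a Bochner / maximum-principle argument applied to $F=|\nabla_{S^2}Y_l^k|^2+l(l+1)(Y_l^k)^2$, but locating its maximum at a critical point of $Y_l^k$ in the general non-zonal case is the delicate step there.
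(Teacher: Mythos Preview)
Your approach is correct in structure and genuinely different from the paper's. The paper works directly with the coordinate partial derivatives $\partial_\theta Y_l^k$ and $\partial_\phi Y_l^k$, invokes a recursion relation for associated Legendre functions, and then argues heuristically that the dominant contribution to $\|\nabla Y_l^k\|^2$ scales like $l(l+1)$; in particular it does not explicitly control the apparent $\cot\phi$ and $1/\sin\phi$ singularities near the poles, nor does it insert the metric factor for the intrinsic gradient, so the precise constant $l(l+1)$ is asserted rather than derived. Your route via the rotation generators $\hat L_x,\hat L_y,\hat L_z$ and the ladder identities is cleaner: it avoids the coordinate singularities entirely, and the exact eigenvalue $l(l+1)$ appears transparently as the sum $\tfrac12(c_+^2+c_-^2)+k^2$ of the prefactors. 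The trade-off is that your final step rests on the pointwise bound $\|Y_l^m\|_\infty\le 1$, which you rightly flag as normalization-dependent; the paper tacitly makes the same assumption (it bounds $|Y_l^{-k}|$, $|Y_l^k|$, $|Y_{l-1}^k|$ by $1$ without comment). Both arguments therefore hinge on the same convention, but yours makes the dependence explicit and the algebra sharp, while the paper's buys brevity at the cost of rigor.
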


\begin{proof}
Given the following differential relations:
\begin{align}
\frac{\delta Y_l^k(\theta, \phi)}{\delta \theta} &= k Y_l^{-k}(\theta, \phi) \\
\frac{\delta Y_l^k(\theta, \phi)}{\delta \phi} &= Y_l^k(\theta, \phi)  l \cot(\phi) - Y_{l-1}^k(\theta, \phi) \frac{\sqrt{(l-k)(l+k)}}{\sin(\phi)} \frac{2l+1}{2l-1},
\end{align}
we can obtain the expression for the gradient of $Y_l^k(\theta, \phi)$ as:
\begin{align} \nabla Y_l^k = \left( \frac{\delta Y_l^k}{\delta \theta}, \frac{\delta Y_l^k}{\delta \phi} \right).
\end{align}

To determine the Lipschitz constant, we find the maximum magnitude of the gradient over the function's domain. Using the provided differential relations, the squared magnitude  of the gradient is:

\begin{align}
\|\nabla Y_l^k\|^2 &= \left( k Y_l^{-k} \right)^2 + \left( Y_l^k l \cot(\phi) - Y_{l-1}^k \frac{\sqrt{(l-k)(l+k)}}{\sin(\phi)} \frac{2l+1}{2l-1} \right)^2
.
\end{align}

Given that $\|k\| \leq l$, the term $k^2$ is bounded by $l^2$. The dominant term from the second expression is $l \cot(\phi)$, which in the worst case is proportional to $l^2$. 
Thus, the Lipschitz constant is  bounded by the square root of the maximum term from the gradient's squared magnitude. This gives:
\begin{align} L \leq \sqrt{l(l+1)}.
\end{align}
\end{proof}

\begin{theorem}
The Lipschitz constant $L_D$ of the Wigner matrix element $D^l_{k_1 k_2}(\mathcal{R})$ is bounded by:
\[ L_D \leq 4 \pi \sqrt{l(l+1)} \]
\end{theorem}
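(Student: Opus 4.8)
The plan is to represent each Wigner matrix element as a spherical integral of two spherical harmonics and then transfer the Lipschitz bound of Lemma~\ref{lemma:lc_for_sh} from the sphere to $SO(3)$. First I would combine the transformation rule $Y^{m_1}_l(\mathcal{R}\Omega)=\sum_{m} D^l_{m_1 m}(\mathcal{R})\,Y^{m}_l(\Omega)$ with the orthonormality of the (real) spherical harmonics, $\int_{S^2} Y^{m}_l(\Omega)\,Y^{m'}_l(\Omega)\,d\Omega=\delta_{m m'}$, to obtain the integral representation
\[
D^l_{k_1 k_2}(\mathcal{R}) \;=\; \int_{S^2} Y^{k_1}_l(\mathcal{R}\Omega)\, Y^{k_2}_l(\Omega)\, d\Omega .
\]
This is the key reduction: it isolates the entire $\mathcal{R}$-dependence inside a single spherical harmonic evaluated at the rotated argument, which is exactly the object Lemma~\ref{lemma:lc_for_sh} controls.

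Next, for two rotations $\mathcal{R}_1,\mathcal{R}_2\in SO(3)$ I would subtract the two representations and bring the difference inside the integral,
\[
\bigl|D^l_{k_1 k_2}(\mathcal{R}_1)-D^l_{k_1 k_2}(\mathcal{R}_2)\bigr|
\;\le\; \int_{S^2} \bigl|Y^{k_1}_l(\mathcal{R}_1\Omega)-Y^{k_1}_l(\mathcal{R}_2\Omega)\bigr|\,\bigl|Y^{k_2}_l(\Omega)\bigr|\, d\Omega .
\]
The pointwise factor is estimated by Lemma~\ref{lemma:lc_for_sh}: $|Y^{k_1}_l(\mathcal{R}_1\Omega)-Y^{k_1}_l(\mathcal{R}_2\Omega)|\le \sqrt{l(l+1)}\,\rho(\mathcal{R}_1\Omega,\mathcal{R}_2\Omega)$, where $\rho$ is the geodesic distance on $S^2$. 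The geometric step is to bound this displacement by the $SO(3)$ distance: writing $\mathcal{R}_2=\mathcal{R}_1 S$ and using that $\mathcal{R}_1$ acts as an isometry of the sphere, one gets $\rho(\mathcal{R}_1\Omega,\mathcal{R}_2\Omega)=\rho(\Omega,S\Omega)\le \theta(\mathcal{R}_1,\mathcal{R}_2)$, the rotation angle of $S$, since no point on the sphere is moved by more than the rotation angle. Pulling the resulting constant outside leaves $\int_{S^2}|Y^{k_2}_l(\Omega)|\,d\Omega$, which I would bound by the sphere's surface area $4\pi$ (a crude pointwise estimate suffices here; Cauchy--Schwarz against the unit $L^2$-norm would even give the smaller $\sqrt{4\pi}$). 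Collecting the factors yields $L_D\le 4\pi\sqrt{l(l+1)}$.

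I expect the main obstacle to be the geometric comparison of the two metrics together with fixing the normalization conventions so that the final constant is exactly $4\pi$. In particular one must check that the metric on $SO(3)$ against which the Lipschitz constant is measured is the rotation-angle (geodesic) metric, and that the normalization of the spherical harmonics used in Lemma~\ref{lemma:lc_for_sh} is consistent with the orthonormality used in the integral representation; any mismatch in these conventions would simply rescale the constant. The displacement inequality $\rho(\Omega,S\Omega)\le\theta$ is elementary but is the one place where the $SO(3)$ geometry genuinely enters, so I would state and justify it explicitly before assembling the estimate.
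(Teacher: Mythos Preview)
Your approach is correct and essentially the same as the paper's: both start from the integral representation $D^l_{k_1 k_2}(\mathcal{R})=\int_{S^2} Y^{k_1}_l(\mathcal{R}\Omega)\,Y^{k_2}_l(\Omega)\,d\Omega$, apply Lemma~\ref{lemma:lc_for_sh} to the rotated harmonic, and extract the factor $4\pi$ by integrating $|Y^{k_2}_l|\le 1$ over the sphere. The only cosmetic difference is that the paper phrases the estimate via the gradient (a chain-rule factor $\partial(\mathcal{R}\Omega)/\partial\mathcal{R}$) rather than your finite-difference form; your explicit inequality $\rho(\Omega,S\Omega)\le\theta(\mathcal{R}_1,\mathcal{R}_2)$ is precisely the geometric content that the paper leaves implicit in that factor.
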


\begin{proof}
First, recall the expression for the Wigner matrix element:
\begin{align} D^l_{k_1 k_2}(\mathcal{R}) = \int_{SO(2)} Y^{k_1}_l (\mathcal{R} x) Y^{k_2}_l (x) \, dx,
\end{align}
where $x = x(\theta, \phi)$ is a solid angle, and $\mathcal{R}$ is a rotation in $SO(3)$.
To determine the Lipschitz constant for the Wigner matrix element, we find the magnitude of its gradient with respect to $\mathcal{R}$. Using the chain rule:
\begin{align} \frac{\partial Y^k_l(\mathcal{R} x)}{\partial \mathcal{R}} = \frac{\partial Y^k_l(x)}{\partial x}(\mathcal{R} x)\frac{\partial (\mathcal{R} x)}{\partial \mathcal{R}}. \end{align}

Given the lemma above, we know that the Lipschitz constant $L$ for the spherical harmonic $Y^k_l(\theta, \phi)$ is bounded by
$ \sqrt{l(l+1)}$. Thus,
\begin{align}
\max \| \frac{\partial D^l_{k_1 k_2}(\mathcal{R})}{\partial \mathcal{R}} \| \leq  4 \pi \sqrt{l(l+1)} \max{Y^{k_2}_l} \leq  4 \pi \sqrt{l(l+1)}.
\end{align}
\end{proof}

\begin{theorem}
\label{theorem:fun_in_SO3_lipschitz}
Let \( f(\mathcal{R}) \) be a function in \( SO(3) \) whose maximum degree of Wigner matrices decomposition is \( L-1 \) and whose 2-norm is \( C \). Then, the Lipschitz constant \( L_f \) of \( f \) is bounded by:
\[ L_f \leq 4  \frac{C}{\sqrt{3} } L^{\frac{5}{2}} \]
\end{theorem}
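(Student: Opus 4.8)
The plan is to control the Lipschitz constant of $f$ by the supremum over $SO(3)$ of the norm of its gradient, and then to exploit the linearity of the Wigner decomposition. Writing $f(\mathcal{R}) = \sum_{l=0}^{L-1}\sum_{m_1=-l}^{l}\sum_{m_2=-l}^{l} f^l_{m_1 m_2}\,D^l_{m_1 m_2}(\mathcal{R})$, differentiation commutes with this finite sum, so $\nabla f = \sum_{l,m_1,m_2} f^l_{m_1 m_2}\,\nabla D^l_{m_1 m_2}$. The first step is therefore to invoke the preceding theorem, which bounds each Wigner-element gradient by a quantity proportional to $\sqrt{l(l+1)}$, supplying exactly the degree-dependent growth that will later be summed.

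The second step separates the coefficient data from the basis data. Applying the triangle inequality and then Cauchy--Schwarz to the expansion of $\nabla f$ gives
\begin{equation}
\|\nabla f(\mathcal{R})\| \le \Big(\sum_{l,m_1,m_2} |f^l_{m_1 m_2}|^2\Big)^{1/2}\Big(\sum_{l,m_1,m_2}\|\nabla D^l_{m_1 m_2}(\mathcal{R})\|^2\Big)^{1/2}.
\end{equation}
The first factor is precisely the $2$-norm $C$ of $f$, which is where the hypothesis enters; the second factor is a pure function of $L$, estimated next. This reduces the analytic question to a purely combinatorial sum over the basis.

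The third step is the term count. For each degree $l$ there are $(2l+1)^2$ pairs $(m_1,m_2)$, each contributing a squared gradient bounded by a constant times $l(l+1)$. Bounding $l(l+1)\le L^2$ for $l\le L-1$ and using the closed form $\sum_{l=0}^{L-1}(2l+1)^2 = \tfrac{1}{3}L(4L^2-1)\le \tfrac{4}{3}L^3$, the second factor is of order $\sqrt{L^2\cdot \tfrac{4}{3}L^3}=\tfrac{2}{\sqrt 3}L^{5/2}$. Folding in the per-element constant then yields the stated bound $L_f \le \tfrac{4}{\sqrt 3}\,C\,L^{5/2}$.

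I expect the main obstacle to be the constant bookkeeping rather than the asymptotic scaling. The preceding theorem states the per-element bound with a factor $4\pi$, whereas the target constant $4/\sqrt 3$ is $\pi$-free, so one must track how the normalization of the $2$-norm $C$ absorbs those factors, or replace the crude per-element estimate by a sharper one giving effectively the constant $2$; aligning these exactly is the delicate point. A secondary technicality is justifying $L_f = \sup_{\mathcal{R}}\|\nabla f(\mathcal{R})\|$ on the curved manifold $SO(3)$, which requires measuring the gradient with respect to the bi-invariant (geodesic) metric and observing that the supremum is attained since $f$ is smooth and $SO(3)$ is compact.
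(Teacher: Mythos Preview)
Your outline is close in spirit to the paper's argument but applies Cauchy--Schwarz in a different place. The paper first uses the triangle inequality to obtain $\|\nabla f\|\le \sum_{l,m_1,m_2}|f^l_{m_1m_2}|\,\|\nabla D^l_{m_1m_2}\|$, then bounds every Wigner gradient by the uniform maximum $4\pi\sqrt{(L-1)L}$ to pull it outside, and only afterwards applies Cauchy--Schwarz to the scalar $\ell^1$-sum of coefficients against the term count $\sum_{l=0}^{L-1}(2l+1)^2=\tfrac{4L^3-L}{3}$. You instead apply Cauchy--Schwarz once, directly to the vector expansion of $\nabla f$; this is tidier and in principle no weaker, and it lands on the same asymptotics.

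There is, however, a genuine gap in your Step~2: the claim that $\bigl(\sum_{l,m_1,m_2}|f^l_{m_1m_2}|^2\bigr)^{1/2}=C$ is incorrect. The $2$-norm in the hypothesis is the $L^2(SO(3))$ integral norm, and Wigner orthogonality (Eq.~\ref{wm_orth}) gives
\[
C^2=\|f\|_2^2=\sum_{l,m_1,m_2}\frac{8\pi^2}{2l+1}\,|f^l_{m_1m_2}|^2,
\]
so the unweighted coefficient sum satisfies only $\sum_{l,m_1,m_2}|f^l_{m_1m_2}|^2\le \frac{2L-1}{8\pi^2}\,C^2$. This weighting is precisely the mechanism that cancels the $4\pi$ from the per-element Lipschitz bound --- answering your own worry about the $\pi$-free target constant --- and the paper uses it explicitly. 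But it also inserts an additional factor $\sqrt{2L-1}$ into the first Cauchy--Schwarz factor. Once you carry this through, your route (and indeed the paper's own chain of inequalities, read carefully) yields a bound of order $\tfrac{4}{\sqrt 3}CL^{3}$ rather than $\tfrac{4}{\sqrt 3}CL^{5/2}$; the extra $\sqrt{L}$ is not a constant-tracking artefact but a real consequence of the weighted norm that your Step~2 silently dropped.
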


\begin{proof}
Given the decomposition of the function \( f \) in terms of Wigner matrices,
\begin{align} f(\mathcal{R}) = \sum_{l=0}^{L-1} \sum_{k_1 = -l}^{l}\sum_{k_2 = -l}^{l}  f^l_{k_1 k_2} D^l_{k_1 k_2} (\mathcal{R}),
\end{align}
we also have the expression for the 2-norm squared of \( f \),
\begin{align} \|f\|^2_2 = \sum_{l=0}^{L-1} \sum_{k_1 = -l}^{l}\sum_{k_2 = -l}^{l} \frac{8 \pi^2}{2l+1} \|f^l_{k_1 k_2} \|^2 = C^2.
\end{align}
Knowing that 
\begin{align}(\sum_{l=0}^{L-1} \sum_{k_1 = -l}^{l}\sum_{k_2 = -l}^{l} |f^l_{k_1 k_2} |)^2 \leq   \frac{(4L^3-L)}{3} (\sum_{l=0}^{L-1} \sum_{k_1 = -l}^{l}\sum_{k_2 = -l}^{l} \|f^l_{k_1 k_2} \|^2 ),
\end{align}
we deduce:
\begin{align} \frac{8 \pi^2}{2L-1} \left(\sum_{l=0}^{L-1} \sum_{k_1 = -l}^{l}\sum_{k_2 = -l}^{l} |f^l_{k_1 k_2} |\right)^2 \leq \frac{(4L^3-L)}{3} C^2.
\end{align}
From the previous expression it follows that:
\begin{align} \sum_{l=0}^{L-1} \sum_{k_1 = -l}^{l}\sum_{k_2 = -l}^{l} \|f^l_{k_1 k_2} \| \leq \sqrt{C^2  \frac{(4L^3-L)}{3} \frac{2L-1}{8 \pi^2}}.
\end{align}

Considering that the Lipschitz constant for \( D^l_{k_1 k_2} (\mathcal{R}) \) is \( 4\pi \sqrt{l(l+1)} \), the Lipschitz constant for \( f(\mathcal{R}) \) is bounded by the product of the maximum Lipschitz constant for the Wigner matrices and the maximum magnitude of the coefficients. Thus, 
\begin{align} L_f \leq 4\pi \sqrt{C \frac{(4L^3-L)}{3} \frac{2L-1}{8 \pi^2}} \sqrt{(L-1)L} \leq 4  \frac{C}{\sqrt{3} } L^{\frac{5}{2}}.
\end{align}
This concludes the proof.
\end{proof}

\begin{theorem}
\label{theorem:hardmax_upper_bound}
Let the function \(f(\mathcal{R})\) be defined in SO(3) with its maximum degree of Wigner matrices decomposition being \(L-1\):
\[f(\mathcal{R}) = \sum_{l = 0}^{L-1} \sum_{m_1 = -l}^{l} \sum_{m_2 = -l}^{l} f^l_{m_1 m_2} D^l_{m_1 m_2}(\mathcal{R}),\]
with the 2-norm of this function \(C< \infty\). 
Given a sampling \(\alpha_{k_1} = k_1 \frac{2 \pi}{K}, k_1 =0,...,K\), \(\beta_{k_2} = \arccos(x_{k_2})\) where \(x_i\) are Gauss-Legendre quadrature points of \(K\), and \(\gamma_{k_3} = k_3 \frac{2 \pi}{K}, k_3 =0,...,K\), if  \( K > K_0 \) where \(K_0 = \frac{8 \pi L^{\frac{5}{2}} C/\sqrt{3}}{\epsilon}\), then the discrepancy between the sampled maximum and the true maximum of \( f \) over its domain is smaller than \( \epsilon \).
\end{theorem}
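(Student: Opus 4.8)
The plan is to combine the global Lipschitz estimate of Theorem~\ref{theorem:fun_in_SO3_lipschitz}, which gives $L_f \le 4 C L^{5/2}/\sqrt{3}$, with a covering (fill-distance) estimate for the proposed Euler-angle grid. The guiding principle is the classical argument that if every rotation in $SO(3)$ lies within a distance $\delta$ of some sample point, then a function with Lipschitz constant $L_f$ cannot differ from its sampled maximum by more than $L_f\,\delta$; the whole proof then reduces to controlling $\delta$.

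First I would observe that $SO(3)$ is compact and $f$ is continuous, being a finite linear combination of smooth Wigner matrix elements, so the true maximum is attained at some $\mathcal{R}^\ast$. Let $\mathcal{R}_j$ be the sample point closest to $\mathcal{R}^\ast$ and let $\delta = d(\mathcal{R}^\ast, \mathcal{R}_j)$. Since the sampled maximum is at least $f(\mathcal{R}_j)$ and at most $f(\mathcal{R}^\ast)$, we obtain the two-sided sandwich
\[ 0 \le \max_{\mathcal{R}} f(\mathcal{R}) - \max_{j} f(\mathcal{R}_j) \le f(\mathcal{R}^\ast) - f(\mathcal{R}_j) \le L_f\,\delta, \]
where the last inequality is precisely the Lipschitz property established earlier.

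Next I would estimate the fill distance $\delta$ from the grid spacing. The angles $\alpha$ and $\gamma$ are sampled uniformly with step $2\pi/K$, so each target value is within $\pi/K$ of a node. The delicate direction is $\beta$, where $\cos\beta$ is sampled at the $K$ Gauss--Legendre nodes, which are not equispaced in $\beta$. Using the substitution $x = \cos\beta$ together with the known asymptotic node density $K/(\pi\sqrt{1-x^2})$ of Gauss--Legendre quadrature, the induced spacing in $\beta$ is $\approx \pi/K$, and is in fact finer near the poles $\beta = 0,\pi$. Converting these per-angle gaps into the $SO(3)$ distance in which $L_f$ was computed yields $\delta \le 2\pi/K$. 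Substituting this and the bound $L_f \le 4CL^{5/2}/\sqrt{3}$ gives $\max f - \max_j f(\mathcal{R}_j) \le L_f\,(2\pi/K)$, which falls below $\epsilon$ whenever $K > 2\pi L_f/\epsilon$; since $2\pi L_f/\epsilon \le K_0$, the hypothesis $K > K_0$ suffices.

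The main obstacle I anticipate is the step converting per-coordinate angular gaps into a genuine $SO(3)$ distance compatible with the Lipschitz constant of Theorem~\ref{theorem:fun_in_SO3_lipschitz}. The Euler-angle parameterization is not an isometry --- its Jacobian degenerates at $\beta = 0,\pi$ --- so a naive sum of coordinate gaps can misrepresent the true distance. I would handle this by tracking the gradient of $f$ coordinate-wise, exactly as the preceding theorems bound the Wigner-element derivatives angle by angle, so that the Lipschitz estimate and the grid estimate live in the same coordinates; the degeneracy at the poles is then absorbed by the favorable clustering of the Gauss--Legendre nodes there.
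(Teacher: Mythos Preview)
Your proposal is correct and follows essentially the same route as the paper: multiply the Lipschitz bound $L_f \le 4CL^{5/2}/\sqrt{3}$ from Theorem~\ref{theorem:fun_in_SO3_lipschitz} by the grid spacing $2\pi/K$ and solve for $K$. In fact your write-up is more careful than the paper's own proof, which simply asserts that successive sample points in each Euler angle differ by at most $2\pi/K$ (including the $\beta$ direction) and applies the Lipschitz inequality directly, without the explicit fill-distance framing or any discussion of the Euler-angle metric degeneracy that you flag.
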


\begin{proof}
Using the Lipschitz constant from Theorem \ref{theorem:fun_in_SO3_lipschitz}, we get:
\begin{align} |f(\mathbf{u}) - f(\mathbf{v})| \leq 4  \frac{C}{\sqrt{3} } L^{\frac{5}{2}} \|\mathbf{u} - \mathbf{v}\|.
\end{align}

The largest difference in successive sampled points in \(\alpha\) and \(\gamma\) will be :
\begin{align} \|\mathbf{u}_{\text{successive}} - \mathbf{v}_{\text{successive}}\| = \frac{2 \pi}{K}.
\end{align}
For the sampling in \(\beta\) we obtain the same relation,
\begin{align} \max_i |\beta_{i+1} - \beta_{i}| \leq \frac{2 \pi}{K}.
\end{align}

Using the Lipschitz property and combining the discrepancies, we deduce:
\begin{align} |f(\mathbf{u}_{\text{successive}}) - f(\mathbf{v}_{\text{successive}})| \leq 4  \frac{C}{\sqrt{3} } L^{\frac{5}{2}} \frac{2 \pi}{K}.
\end{align}

For the above discrepancy to be smaller than \( \epsilon \), we must require:
\begin{align} K > \frac{8 \pi L^{\frac{5}{2}} C/\sqrt{3}}{\epsilon}.
\end{align}
Thus, the smallest such a value for \( K \) is \( K_0 = \frac{8 \pi L^{\frac{5}{2}} C/\sqrt{3}}{\epsilon} \).
\end{proof}

\section{Filter Demonstration}
\label{app:visualization}

Figure \ref{fig:filter_layer17} visualizes the last, 17th ILPO layer
from the ILPONet-50 model trained on the  Synapse dataset of the MedMNIST collection.
This layer has multiple input channels, therefore we split each column into triplets corresponding to different input channels.

\begin{figure}[htbp]
\centering
\includegraphics[width=0.7\textwidth]{}
\caption{Visualization of filters from the last, 17th ILFO layer of ILFONet-50. 
Each column in the illustration represents a triplet corresponding to three different radii in the filter. 
Different triplets relate to different input channels, reflecting the complexity and feature extraction capabilities of deeper layers in the network.
 $x$ and $y$ axes correspond to the azimuthal and polar angles, correspondingly. The filters' values are shown in the Mercator projection. The red color corresponds to the positive values, and the blue color to the negative ones.}
\label{fig:filter_layer17}
\end{figure}

\end{document}